\documentclass[10pt,twocolumn,letterpaper]{article}

\usepackage{cvpr}
\usepackage{times}
\usepackage{epsfig}
\usepackage{graphicx}
\usepackage{amsmath}
\usepackage{amssymb}

\usepackage{listings}

\lstdefinelanguage{Lua} 
{ 
        morekeywords={and,break,do,else,elseif,end,false,for,function,if,in,local,nil,not,or,repeat,return,then,true,until,while,_G,_ENV}, 
        sensitive=true, 
        morecomment=[l]{--}, 
        morecomment=[s]{--[[}{]]}, 
        morestring=[b]", 
        morestring=[b]' 
}

\usepackage{amsthm}
\newtheorem{lemma}{Lemma}

\newcommand\blfootnote[1]{%
  \begingroup
  \renewcommand\thefootnote{}\footnote{#1}%
  \addtocounter{footnote}{-1}%
  \endgroup
}


\usepackage[pagebackref=true,breaklinks=true,letterpaper=true,colorlinks,bookmarks=false]{hyperref}

\cvprfinalcopy 


\ifcvprfinal\pagestyle{empty}\fi
\begin{document}

\title{\textsc{TI-pooling}: transformation-invariant pooling \\
for feature learning in Convolutional Neural Networks}

\author{
Dmitry Laptev$^*$ \qquad Nikolay Savinov$^*$ \qquad Joachim M. Buhmann \qquad Marc Pollefeys\\
Department of Computer Science, ETH Zurich, Switzerland\\
{\tt\small \{dlaptev, nikolay.savinov, jbuhmann, marc.pollefeys\}@inf.ethz.ch}
}

\maketitle
\thispagestyle{empty}

\begin{abstract}
\blfootnote{$^*$The authors assert equal contribution and joint first authorship.}
In this paper we present a deep neural network topology that incorporates a simple to implement transformation-invariant pooling operator (\textsc{TI-pooling}). This operator is able to efficiently handle prior knowledge on nuisance variations in the data, such as rotation or scale changes. Most current methods usually make use of dataset augmentation to address this issue, but this requires larger number of model parameters and more training data, and results in significantly increased training time and larger chance of under- or overfitting. The main reason for these drawbacks is that that the learned model needs to capture adequate features for all the possible transformations of the input. On the other hand, we formulate features in convolutional neural networks to be transformation-invariant. We achieve that using parallel siamese architectures for the considered transformation set and applying the \textsc{TI-pooling} operator on their outputs before the fully-connected layers. We show that this topology internally finds the most optimal "canonical" instance of the input image for training and therefore limits the redundancy in learned features. This more efficient use of training data results in better performance on popular benchmark datasets with smaller number of parameters when comparing to standard convolutional neural networks with dataset augmentation and to other baselines.
\end{abstract}

\section{Introduction}
\label{sec:intro}
Recent advances in deep learning lead to impressive results in various applications of machine learning and computer vision to different fields. These advances are largely attributed to expressiveness of deep neural networks with many parameters, that are effectively able to approximate any decision function in the data space \cite{anyfunc}.

While this is true for all the neural network architectures with many layers and with sufficient number of parameters, the most impressive results are being achieved in the fields where deep architectures heavily rely on internal structure of the input data, such as speech recognition, natural language processing and image recognition  \cite{nnstructure}. For example, convolutional neural networks \cite{alexnet} learn kernels to be applied on images or signals reflecting the spatial or temporal dependencies between the neighbouring pixels or moments in time. This structural information serves for internal regularization through weight sharing in convolutional layers \cite{cnnreg}. When combined with the expressiveness of multilayer neural networks, it allows for learning very rich feature representation of input data with little to no preprocessing.

Incorporating structural information permits to work with the inner dependencies in the representation of the data, but only few works have addressed the possible use of other structural prior information known about the data. For example, many datasets in computer vision contain some nuisance variations, such as rotations, shifts, scale changes, illumination variations, etc. These variations are in many cases known in advance from experts collecting the data and one can significantly improve the performance when being considered during training.

The effect is even more explicit when dealing with domain-specific problems. \Eg in many medical imaging datasets, the rotation can be irrelevant due to symmetric nature of some biological structures. At the same time, the scale is fixed during imaging process and should not be considered as a nuisance factor. Moreover scale-invariance can even harm the performance if object size is at least somehow informative, for example, in case of classifying healthy cells from cancer cells \cite{fuchs}. We describe one biomedical example in details in section \ref{sec:experiments:neuro}.

The state of the art approach to deal with these variations and the most popular one in deep learning is \textit{data augmentation} \cite{augmentation} -- a powerful technique that transforms the data point according to some predefined rules and uses it as a separate training sample during the learning procedure. The most common transformations being used in general computer vision are rotations, scale changes and random crops. This approach works especially good when applied with deep learning algorithms, because the models in deep learning are extremely flexible and are able to learn the representation for the original sample and for the transformed ones and therefore are able to generalize also to the variations of the unseen data points \cite{augmentation}. This approach, however, has some limitations listed below.
\begin{itemize}
  \item The algorithm still needs to learn feature representations separately for different variations of the original data. \Eg if a neural network learns edge-detecting features \cite{ciresan} under rotation-invariance setting, it still needs to learn separately vertical and horizontal edge detectors as separate paths of neuron activations.
  \item Some transformations of the data can actually result in the algorithm learning from noise samples or wrong labels. \Eg random crops applied to the input image can capture only a non-representative part of the object in the image, or can fully cut the object out, in which case the algorithm can either overfit to the surrounding or learn from a completely useless representation.
  \item The more variations are considered in the data, the more flexible the model needs to be to capture all the variations in the data. This results in more data required, longer training times, less control over the model complexity and larger potential for overfitting.
\end{itemize}

On the other hand we use the approach inspired by max-pooling operator \cite{maxpooling} and by multiple-instance learning \cite{mil_net} to formulate convolutional neural network features to be transformation-invariant. We take the path of neuron activations in the network and feed it, in a similar manner to augmentation, with the original image and its transformed versions (input instances). But instead of treating all the instances as independent samples, we accumulate all of the responses and take the maximum of them (\textsc{TI-pooling} operator). Because of the maximum, the response is independent from the variations and results in transformation-invariant features that are further propagated through the network. At the same time this allows for more efficient data usage as it learns from only one instance, that already gives maximum response. We call these instances "canonical" and describe in more details in \ref{sec:method:theory}.

This topology is implemented as parallel siamese network \cite{siamese} layers with shared weights and with inputs corresponding to different transformations, described in details in section \ref{sec:method} and sketched in figure \ref{fig:pipeline}. We provide theoretical justification on why features learned in this way are transformation-invariant and elaborate on further properties of \textsc{TI-pooling} in section \ref{sec:method:theory}.

Using \textsc{TI-pooling} permits to learn smaller number of network parameters than when using data augmentation, and lacks a drawback of some data-points missing relevant information after the applied transformation: it only uses the most representative instance for learning and omits the augmentations that are not useful. We review other approaches dealing with nuisance data variations in section \ref{sec:related}.

We evaluate our approach and demonstrate it's properties on three different datasets. The first two are variations of the original MNIST dataset \cite{mnist}, where we significantly outperform the state of the art approaches (for the first variation) or match the current state of the art performance with significantly faster training (on the second variation). The third dataset is a real-world biomedical segmentation dataset with explicit rotation-invariance. On this benchmark we show that incorporating \textsc{TI-pooling} operator increases the performance over the baselines with similar number of parameters, and also demonstrate the property of \textsc{TI-pooling} to find canonical transformations of the input for more efficient data usage.

\section{Related works}
\label{sec:related}

\subsection{Transformation invariant features}

\textbf{Predefined features.} One of the easiest ways to ensure transformation-invariance in most computer vision algorithms is to use specially designed features. The most famous examples of general-purpose transformation-invariant features are SIFT (scale-invariant feature transform) \cite{sift} and its rotation-invariant modification RIFT (rotation-invariant feature transform) \cite{rift}. Another example of domain-specific features is rotation-invariant Line filter transforms \cite{lft}, designed specifically to identify elongated structures for blood vessel segmentation.

Using these features permits the machine learning algorithm to deal with only the inputs that are already invariant to some transformations. But designing the features manually is time-consuming and expensive while not always possible. Furthermore, this approach has two other major limitations: these features are usually not adaptive to the task being solved and they are able to handle only very specific variations in the original data.

\textbf{Feature learning.} To overcome the limitation of task-adaptability, one could use features learned from the input data. \Eg "bag of visual words" \cite{bovw} does not distinguish the positions in which the "visual word" occurs, and therefore it is shift-invariant. With minor modifications, also rotation-invariance can be achieved \cite{rbovw}.

Another approach is transformation-invariant decision jungles (TICJ) \cite{ticdj}. This algorithm learns features to be invariant to any type of predefined transformations as pseudo-linear convolutional kernels and then combines them using a decision tree-like algorithm. Two major limitations of this approach are (i) greediness in the feature learning process (only one kernel is learned at a time) and (ii) relatively low expressiveness of the combining machine-learning algorithm. The algorithms that are usually able to overcome both of these limitations are neural networks.

\subsection{Deep neural networks}
Convolutional deep neural networks \cite{alexnet} are known to learn very expressive features in an adaptive manner depending on the task. Moreover in many cases they resemble some transformation-invariant properties, such as small shift-invariance due to max-pooling layers \cite{maxpooling}. Because the maximum is taken over the neighbouring pixels, local one-pixel shifts usually do not change the output of the subsampling layer. A more general pooling operations \cite{hmax} permit to also consider invariance to local changes that however do not correspond to specific prior knowledge.

To incorporate global transformation-invariance with arbitrary set of transformations, usually the data augmentation is used, as discussed in section \ref{sec:intro}. But also other approaches exist, such as multi-column deep neural networks \cite{multicol} and spatial transformer networks \cite{stn}.

The idea behind multi-column networks is to train different models with the same topology but using different datasets: the original dataset, and the transformed datasets (one separate model is trained for every transformation considered). Then an average of the outputs of individual models is taken to form the final solution.

Spatial transformer networks (STN) follow a completely different idea of looking for a canonical appearance of the input data point. They introduce a new layer to the topology of the network, that transforms the input according to the rules of parametrized class of transformations. The key feature of this approach is that it learns the transformation parameters from the data itself without any additional supervision, except of a defined class of transformations.

The \textsc{TI-pooling} approach in many ways has very similar properties to STN. As we demonstrate in section \ref{sec:method:theory}, our method also finds a canonical position of the input image. But instead of defining a \textit{class} of transformations, we define a more strict \textit{set} of transformations to be considered. In section \ref{sec:experiments:mnist} we show that we achieve similar to STN results on a benchmark introduced by its authors \cite{stn}, but with simpler model and with shorter training time.

\subsection{Multiple instance learning}
Multi-column networks with model averaging described above fall into a category of more general techniques called "multiple instance learning" (MIL) \cite{mil_net}. The area of applications of MIL is very broad, and it can also be applied to train the algorithms invariant to some variations defined as a set of transformations $\Phi$.

Assume that we are given an algorithm $\mathcal{A}$ with some input $x$ and scalar (for simplicity) output $\mathcal{A}(x)$. Then multiple-instance learning approach suggests that algorithm $\mathcal{B}(x)$ will be in many cases transformation-invariant if defined as 
\[
  \mathcal{B}(x) = \max_{\phi \in \Phi} \mathcal{A}(\phi(x))
\]
Instead of a maximum, many different operators can be used (such as averaging), but maximum proves to work best in most applications, so we also focus on it in this work.

The main difference between our approach and MIL is that we propose to learn individual features to be transformation-invariant, and not the algorithm as a whole. Each of the features can then be learned in a way that is most optimal specifically for this feature, allowing different features to rely on different canonical instances and make the most of feature inter-dependencies. We describe this relation in more details in section \ref{sec:method:theory}. This results in our method significantly outperforming the standard MIL models as we show further in section \ref{sec:experiments:neuro}.

Other approaches that are based on the ideas similar to the one presented in this paper are rolling feature maps \footnote{http://benanne.github.io/2015/03/17/plankton.html} and multi-view networks \cite{su2015multi}. The former explores a pooling over a set of transformations, but does not guarantee the transformation-invariance of the features learned. And the latter solves a problem of view invariance, not invariance to an expert-defined set of transformations.

\section{Method description}
\label{sec:method}

\begin{figure*}[t]
\begin{center}
  \includegraphics[width=0.98\textwidth]{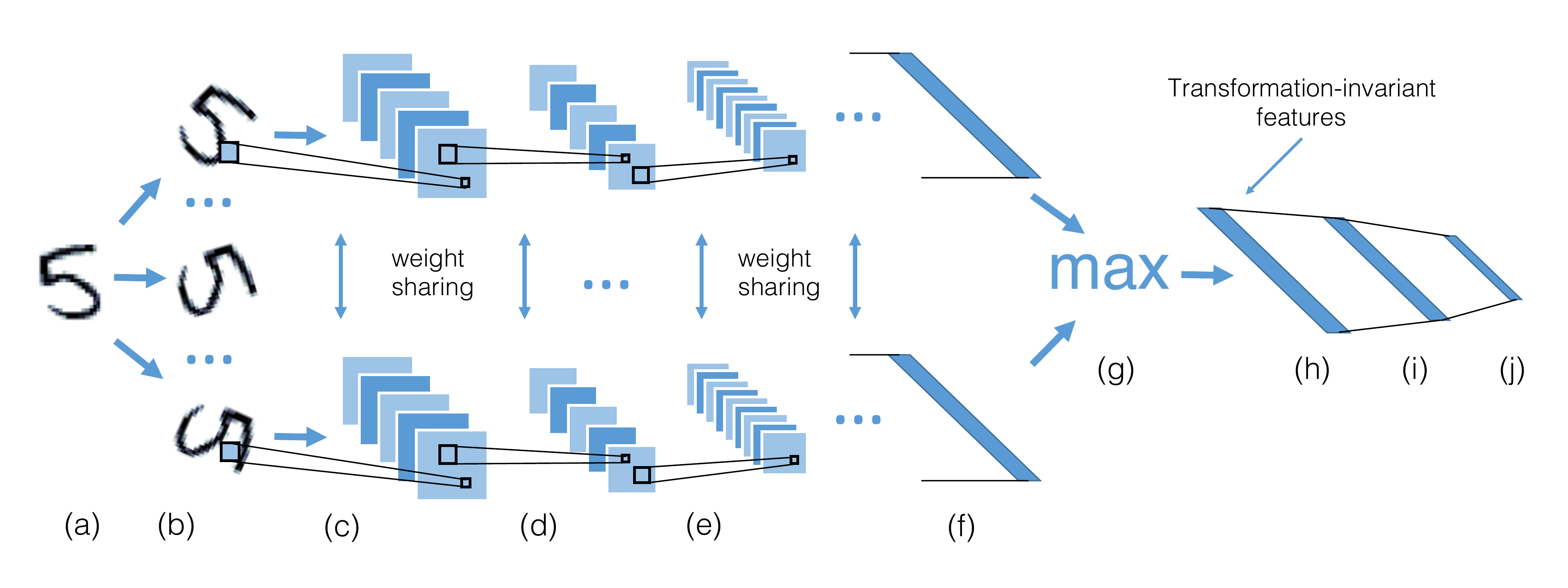}
\end{center}
\caption{Network topology and pipeline description. First, input image $x$ (a) is transformed according to the considered set of transformations $\Phi$ to obtain a set of new image instances $\phi(x), \phi \in \Phi$ (b). For every transformed image, a parallel instance of partial siamese network is initialized, consisting only of convolutional and subsampling layers (two copies are shown in the top and in the bottom of the figure). Every instance is then passed through a sequence of convolutional (c, e) and subsampling layers (d), until the vector of scalars is not achieved (e). This vector of scalars is composed of image features $f_k(\phi(x))$ learned by the network. Then \textsc{TI-pooling} (element-wise maximum) (g) is applied on the feature vectors to obtain a vector of transformation-invariant features $g_k(x)$ (h). This vector then serves as an input to a fully-connected layer (i), possibly with dropout, and further propagates to the network output (j). Because of the weight-sharing between parallel siamese layers, the actual model requires the same amount of memory as just one convolutional neural network. \textsc{TI-pooling} ensures that the actual training of each features parameters is performed on the most representative instance $\phi(x)$.}
\label{fig:pipeline}
\end{figure*}

\subsection{Convolutional neural networks notation}
Convolutional neural networks are usually represented as a sequence of convolutional and subsampling layers with one or more fully-connected layers before the outputs. In this section we for simplicity assume that the input image is two-dimensional (\ie incorporate no colour channels), but the approach generalizes also for colored images. We also omit the explicit notation for activation functions, assuming activations to be incorporated in the specific form of an operator $O$ defined below.

Assume that each neuron performs an operation on the input $x$, that we will refer to as an operator $O(x,\theta)$. It can be either a convolution operator, in which case $\theta$ is a vectorized representation of a convolutional kernel. Or it can be a subsampling operator, which is usually non-parametric, and has no parameters $\theta$. The size of the output matrix $O(x, \theta)$ in each dimension is smaller than the size of $x$ by the size of the kernel in case of a convolution operator, or two times smaller than the input $x$ in case of a subsampling operator.

We refer to these operators applied in layer $l \in \{1, \dots, L\}$ using superscript $l$ on the parameters $\theta$ and we refer to a specific index of the operator within a layer as a subscript. \Eg convolutional operations applied in the first layer of the network can be referred as $O(x, \theta^1_1), \dots, O(x, \theta^1_{n_1})$, where $n_1$ is the number of neurons in layer 1 (we define all the constants in table \ref{tab:topology}). Input to the neuron $i$ in layer $l$ is constructed as a sum of outputs of a previous layer: $\sum_{j=1}^{n_{l-1}} O(x, \theta^{l-1}_{j})$.

We refer to feature $f_k$ of the input image $x$ as an output of a neuron $k$ in a layer that contains only scalar values, \ie layer $l$ such that $O(\cdot, \theta^l_i) \in \mathbb{R}^{1 \times 1}$. Formally this feature is defined as the following composition that we for notation clarity split into a sequence of formulas:
\begin{multline}
\label{eq:f}
  f_i(x) = \sum_{j=1}^{n_{l-1}} O(\cdot, \theta^{l-1}_{j}), \text{where} \\
  O(\cdot, \theta^{l-1}_{k}) = \sum_{j=1}^{n_{l-2}} O(\cdot, \theta^{l-2}_{j}), \text{where} \\
  \ldots \\
  O(\cdot, \theta^{1}_{k}) = O(x, \theta^{1}_{j})
\end{multline}

On top of these features $f_k(x)$, fully-connected layers are usually stacked with some intermediate activation functions, and possibly with dropout masks \cite{dropout} during learning. These details are not directly relevant for this paper and therefore not described in details.

\subsection{Network topology}
Features $f_k(x)$, introduced before, are very powerful when all the parameters $\theta$ are properly trained. They, however, lack a very important property of incorporating any prior information, such as invariance to some known nuisance variations in the data. We fix this property with a relatively easy trick, inspired by multiple-instance learning (MIL) and max-pooling operator.

Assume that, given a set of possible transformations $\Phi$, we want to construct new features $g_k(x)$ in such a way that their output is independent from the known in advance nuisance variations of the image $x$. We propose to formulate these features in the following manner:
\begin{equation}
\label{eq:g}
  g_k(x) = \max_{\phi \in \Phi} f_k(\phi(x))
\end{equation}

We refer to this max-pooling over transformations as to transformation-invariant pooling or \textsc{TI-pooling}. Because of the maximum being applied, every learned feature becomes less dependent on the variations being considered. Moreover, for some sets $\Phi$ we achieve full transformation-invariance, as we theoretically show in section \ref{sec:method:theory}.

As mentioned before and as we show in section \ref{sec:method:theory}, \textsc{TI-pooling} ensures that we use the most optimal instance $\phi(x)$ for learning, and comparing to MIL models we allow every feature $k$ to find its own optimal transformation $\phi$ of the input $x$: $\phi = \arg\max_{\phi \in \Phi}f_k(\phi(x))$.

The topology of the proposed model is also briefly sketched and described in figure \ref{fig:pipeline}.

\textbf{Back-propagation.} Let $\nabla f_k(x)$ be the gradient of the feature $f_k(x)$ defined in equations \ref{eq:f} with respect to the outputs $O(\cdot, \theta^{l-1}_j)$ of the previous layer. This gradient is standard for convolutional neural networks and we do not discuss in details how to compute it. From this gradient we can easily formulate the gradient $\frac{d g_k(x)}{d f_k(x)}$ of the transformation-invariant feature $g_k(x)$ in the following manner:
\[
  \frac{d g_k(x)}{d f_k(x)} = \nabla f_k(\phi(x)), \text{where}~\phi = \arg\max_{\phi \in \Phi} f_k(\phi(x))
\]

The gradient of the neurons of the following fully-connected layer with respect to the output of $g_k(x)$ stays exactly the same as for conventional network topology. Therefore, we have all the building blocks for a back-propagation parameter optimization \cite{backprop}, which concludes the description of \textsc{TI-pooling} and of the proposed topology.

\subsection{Theory and properties}
\label{sec:method:theory}
\textbf{Theoretical transformation-invariance.} Lemma \ref{lemma1}, adapted for our feature from \cite{ticdj}, formulates the conditions on the set $\Phi$ for which the features formulated in equation \ref{eq:g} are indeed transformation-invariant, \ie give exactly the same output for both the original image $x$ and every considered transformation $\phi(x), \phi \in \Phi$.

\begin{lemma}
\label{lemma1}
Let the function $g_k(\cdot)$ to be defined as a maximum over transformations $\phi \in \Phi$ of the function $f_k(\cdot)$. This function is transformation-invariant if the set $\Phi$ of all possible transformations forms a group, \ie satisfies the axioms of closure, associativity, invertibility and identity.
\end{lemma}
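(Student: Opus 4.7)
The plan is to unpack the definition of transformation-invariance and then use the group axioms on $\Phi$ to show that the set over which the maximum is taken is preserved under any $\psi \in \Phi$. By ``transformation-invariant'' I mean that $g_k(\psi(x)) = g_k(x)$ holds for every $\psi \in \Phi$ and every input $x$, so this is the identity I would aim to prove.

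First I would expand the left-hand side using the definition in equation \ref{eq:g}:
\[
  g_k(\psi(x)) = \max_{\phi \in \Phi} f_k\bigl(\phi(\psi(x))\bigr) = \max_{\phi \in \Phi} f_k\bigl((\phi \circ \psi)(x)\bigr),
\]
where the second equality rewrites the nested application as a single composed transformation. At this point the goal is reduced to showing that the set $\Phi \circ \psi := \{\phi \circ \psi : \phi \in \Phi\}$ equals $\Phi$, because then the maximum on the right becomes $\max_{\phi' \in \Phi} f_k(\phi'(x)) = g_k(x)$ by a change of index $\phi' = \phi \circ \psi$.

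Next I would verify $\Phi \circ \psi = \Phi$ using exactly the four group axioms in the statement. Closure gives $\Phi \circ \psi \subseteq \Phi$. For the reverse inclusion, take any $\phi' \in \Phi$; invertibility supplies $\psi^{-1} \in \Phi$, closure gives $\phi' \circ \psi^{-1} \in \Phi$, and associativity gives $(\phi' \circ \psi^{-1}) \circ \psi = \phi' \circ (\psi^{-1} \circ \psi) = \phi' \circ e = \phi'$, where $e \in \Phi$ is the identity. So $\phi'$ lies in $\Phi \circ \psi$, establishing $\Phi \subseteq \Phi \circ \psi$. The identity axiom also guarantees $\Phi$ is nonempty, so the $\max$ is well-defined.

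There is no serious obstacle here: the only subtlety is being precise that the four axioms are used in the ``right multiplication is a bijection'' argument, and that the transformations act on images by function composition so that nesting $\phi(\psi(x))$ really corresponds to the group operation on $\Phi$. I would end with a brief remark that the same argument works with $\max$ replaced by any other symmetric pooling operator over $\Phi$, since only the invariance of the underlying set under right translation is used.
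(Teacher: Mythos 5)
Your proof is correct: the reduction of $g_k(\psi(x))$ to a maximum over the right-translated set $\Phi\circ\psi$, followed by the verification that $\Phi\circ\psi=\Phi$ via closure, invertibility, associativity and identity, is exactly the standard argument. The paper itself gives no inline proof (it defers entirely to the cited reference \cite{ticdj}), and your argument is essentially the one that reference contains, so there is nothing to add beyond noting that for an infinite $\Phi$ one would also need the supremum to be attained, a point the paper likewise glosses over by working with finite sampled transformation sets.
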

\begin{proof}
For the detailed proof please refer to \cite{ticdj}.
\end{proof}

The statement of the lemma is satisfied for many computer vision tasks: simple transformations, such as rotations or non-linear distortions, as well as their compositions form a group. One common example that does not satisfy this property is local shifts (jittering). \Eg if one wants to consider only one pixel shifts, then the closure axiom of the group does not hold: one pixel shift applied twice gives two-pixel shift, which is not in a transformation set.

\begin{figure}[t]
\begin{center}
  \includegraphics[width=0.90\linewidth]{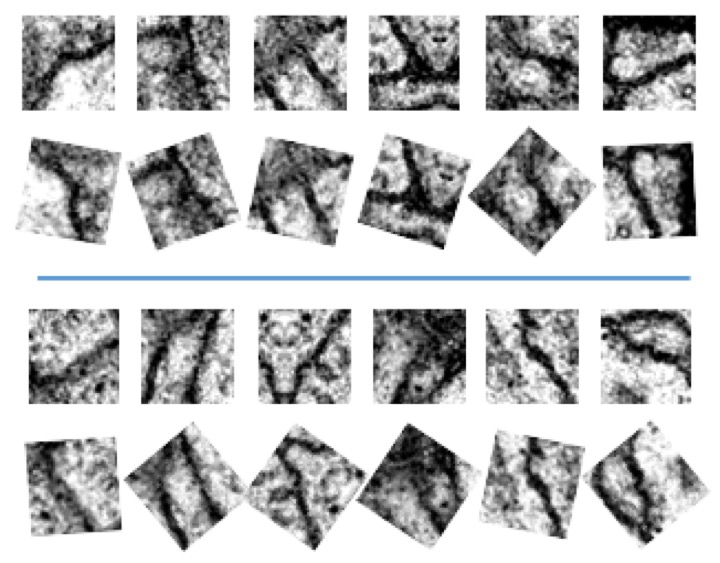}
\end{center}
\caption{First and third rows show the input patches from neuronal segmentation dataset. For this dataset we consider $\Phi$ to be a set of rotations. We then apply a learned model to these patches $x$ and record the angle at which the maximum is achieved for specific feature $g_k(x)$. Then we show the same patches rotated by this angle as shown in rows two and four. One could notice that in most cases the membranes (slightly darker elongated structures) are oriented in approximately the same direction. This means that the algorithm considers this orientation to be canonical for this specific feature and rotates new images to appear similarly.}
\label{fig:canonical}
\end{figure}

\textbf{Canonical position identification.} From a practical point of view, however, the algorithm achieves approximate transformation-invariance even for local transformations. Even if the set $\Phi$ does not form a group, we often observe that the algorithm tries to find a canonical appearance of the image, and then maps a new transformed image to one of the canonical modes. This allows to preserve transformation-invariance in most practical cases with no limitations on $\Phi$. Figure \ref{fig:canonical} shows some examples of neuronal structures oriented in the same manner to a canonical orientation for one of the features.

This property is useful for most problems as it allows for better use of input images. For example, learning discriminative features for every orientation of the image is of course possible with large and deep enough neural network. But assume that now features need to be learned only for canonical orientation (\eg for membranes oriented in all the same direction).

First, for this, much simpler problem, smaller models can be used. Second, the algorithm sees many more examples of canonical vertical edges and therefore can better generalize from them. This brings the next important property of the algorithm.

\begin{figure}[t]
\begin{center}
  \includegraphics[width=0.95\linewidth]{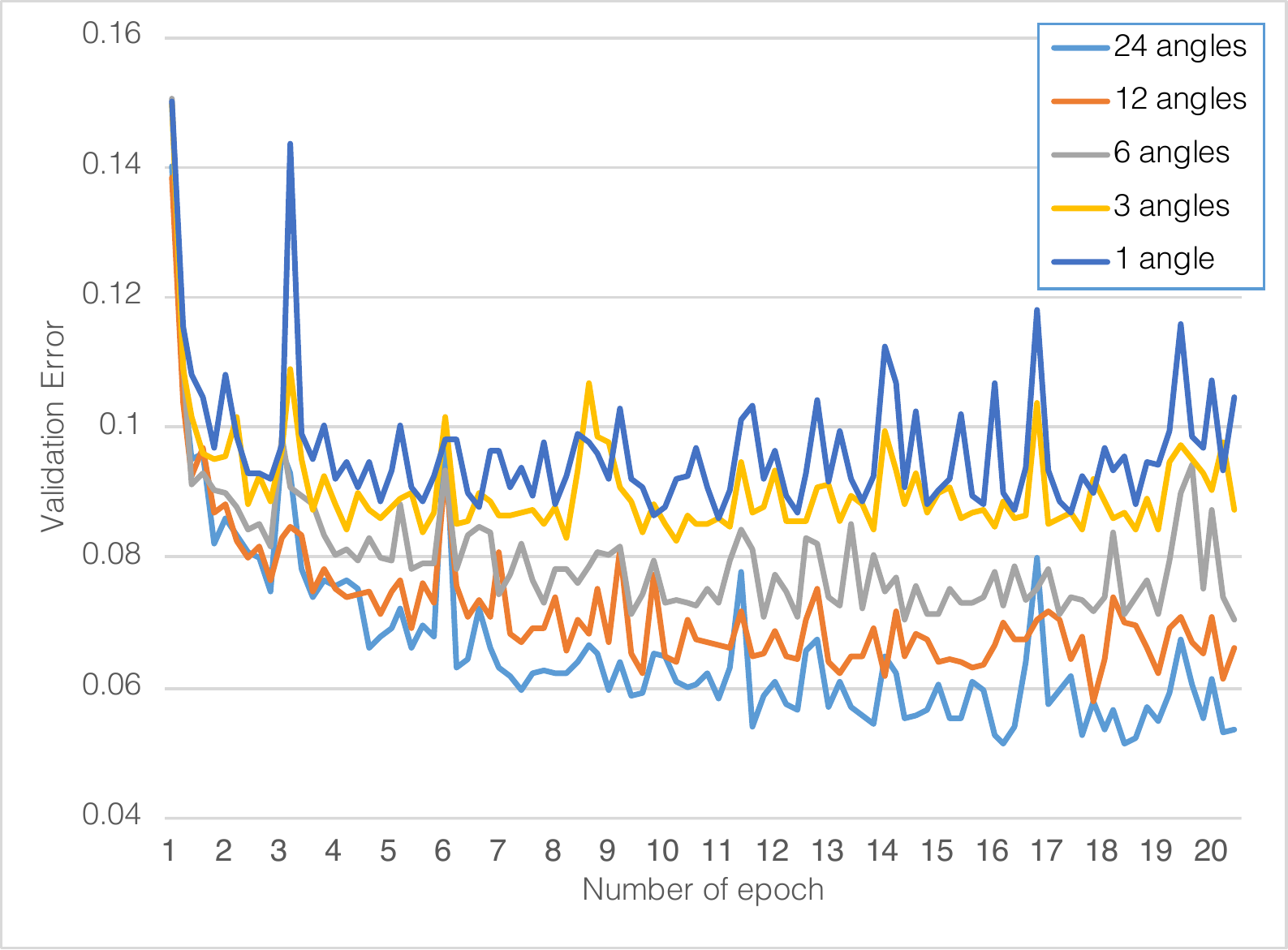}
\end{center}
\caption{Validation error plot for the neuronal segmentation dataset. Depending on how many angles we sample to form a transformation set $\Phi$ (from one, which is equivalent to data augmentation, up to 24) -- the results improve significantly.}
\label{fig:error}
\end{figure}

\textbf{Improved performance and convergence.} Because of more representative examples being used for network training, we observe better performance and convergence rate, when compared with simple data augmentation. Figure \ref{fig:error} shows that the larger the transformation set $\Phi$ -- the better usually the results achieved. This is most probably due to the fact that fewer canonical positions needs to be handled by the learning algorithm.

What \textsc{TI-pooling} is doing to achieve that can be formulated as an exhaustive search over the transformed instances for an instance better corresponding to the current response of the feature. Then only this instance is used to even better improve the performance of the feature. On the other hand, we do not limit all the features to use the same canonical appearance, allowing features to better explore inter-dependencies between the outputs of network layers. We elaborate more on the results in section \ref{sec:experiments}.

\textbf{Any type of transformations.} Another property of the technique, that is worth mentioning, is that it can work with a set of almost any arbitrary transformations. Many works, such as spatial transformer networks \cite{stn}, focus on only limited class of transformations. Those classes can be very rich, \eg include all the possible affine transformations or projections. But still, they need to be differentiable with respect to some defined parameters of the transformation, and, depending on the problem at hand, this can be not enough. \textsc{TI-pooling}, on the contrary, does not rely on differentiability or on any properties of bijective functions or even on the parametrization itself. Examples of common transformations that can be used with our method, and not with \cite{stn} are reflections, most morphological operations and non-linear distortions.

\subsection{Implementation details}
We use Torch7 framework for model formulation and training \cite{torch7}. The easiest way to formulate a proposed model is to use parallel network notation with shared weights as described in figure \ref{fig:pipeline}. The whole model definition requires just few additional lines of code. An example in pseudo-lua code is provided below. Here \texttt{nPhi} is a size of the set $\Phi$.

\lstset{language=Lua}
\begin{lstlisting}
-- define first siamese layers
siamese = Sequential()
...
-- clone and share weights
parallel = Parallel(1, 3)
for phi = 1, nPhi do
  clone = siamese:clone()
  parallel:add(clone)
  clone:share(siamese, 'weight','bias',
              'gradWeight','gradBias')
end
-- formulate a full model
model = Sequential()
model:add(parallel)
model:add(SpatialMaxPooling(nPhi,1,1))
-- add fully-connected layers,
-- dropout and output layer
\end{lstlisting}

The only other modification is to the data: we increase the dimension of the input data tensor by one and stack input instances $\phi(x), \phi \in \Phi$ across the new dimension.

\textbf{Computational complexity.} It may seem like an exhaustive search in the space of possible transformations $\Phi$ significantly increases computational complexity of the pipeline. Indeed, instead of processing one image at a time, we forward-pass $|\Phi|$ images through almost the whole network. We can speed it up by sampling from the space of transformations, but in practice, even searching the full space appears to be more efficient than just data augmentation, because of the following reasons.
\begin{itemize}
  \item Only partial forward pass is done multiple times for the same image, forward-pass through fully-connected layers and back-propagation are exactly the same computationally as for a standard convolutional neural network with the same number of parameters.
  \item Comparing to the data augmentation approach, we make use of every image and it's augmented versions in one pass. Standard convolutional neural network instead makes one pass for every augmented sample, which in the end results in the number of passes equal to the number of augmentated samples to process one image. And because of the previous point, we actually do it more than two times faster.
  \item Because we make use of canonical appearance of the image, the proposed pipeline actually trains more efficiently than the standard neural network, and usually requires smaller number of overall parameters.
\end{itemize}

\section{Experiments}
\label{sec:experiments}
In this section we present the experimental results on three computer vision datasets. The first two datasets are different variations of MNIST dataset \cite{mnist} designed to test artificially-introduced variations in the data. The third one is a neuronal structures segmentation dataset \cite{neuro}, that demonstrates a real-world example of rotation invariance.

\subsection{Rotated MNIST}
\label{sec:experiments:mnist}
Original MNIST dataset \cite{mnist} is a very typical toy dataset to check the performance of new computer vision algorithms modifications. Two variations of MNIST exist to test the performance of different algorithms that are designed to be invariant to some specific variations, such as rotations.

For both the datasets we use the same topology, but slightly different sets $\Phi$. The topology is described in table \ref{tab:topology}. We perform the training using tuning-free convergent adadelta algorithm \cite{adadelta} with the batch size equal to 128 and dropout \cite{dropout} for fully-connected layers.

\begin{table}
\begin{center}
\begin{tabular}{|l|l|}
  \hline
  Layer & Parameters \& channel size \\
  \hline\hline
    input & size: 32x32 \\
    convolution & kernel: 3x3, channel: 40 \\
    relu & ~ \\
    max pooling & kernel: 2x2, stride: 2 \\
    convolution & kernel: 3x3, channel: 80 \\
    relu & ~ \\
    max pooling & kernel: 2x2, stride: 2 \\
    convolution & kernel: 3x3, channel: 160 \\
    relu & ~ \\
    max pooling & kernel: 2x2, stride: 2 \\
    linear & channel: 5120 \\
    relu & ~ \\
    \textsc{TI-pooling} & transformations: $\Phi$ \\
    dropout & rate: 0.5 \\
    linear & channel: 10 \\
    softmax & ~ \\
  \hline
\end{tabular}
\end{center}
\caption{The topology of the network in the experiments.}
\label{tab:topology}
\end{table}

\subsubsection{mnist-rot-12k dataset}
The most commonly used variation of MNIST that is used for validating rotation-invariant algorithms is mnist-rot \cite{mnistrot}. It consists of images from the original MNIST, rotated by a random angle from 0 to $2\pi$ (full circle). This dataset contains 12000 training images, which is significantly smaller, than in the original dataset, and 50000 test samples.

For this dataset we include a \textsc{TI pooling} step over $\Phi$ containing 24 rotations sampled uniformly from 0 to $2\pi$.

We train this network on a single GPU for 1200 epochs and compare the achieved test error with the best results published for this dataset. The best approach by \cite{exp1_3} employs restricted boltzmann machines and achieves 4.2\% error, while we achieve 1.2\% -- the results more than three times better in terms of classification error. The final errors for the proposed and the state of the art results are present in the table \ref{tab:mnistrot}. It can be seen that using \textsc{TI-pooling} indeed leads to significant improvements with no significant effort of optimising topology and just by better exploiting the variations in the data.

\begin{table}
\begin{center}
\begin{tabular}{|l|c|}
  \hline
  Method & Error, \% \\
  \hline\hline
  ScatNet-2 \cite{exp1_1} & 7.48 \\
  PCANet-2 \cite{exp1_2} & 7.37 \\
  TIRBM \cite{exp1_3} & 4.2 \\
  \textsc{TI-pooling} (ours) & \textbf{1.2}  \\
  \hline
\end{tabular}
\end{center}
\caption{Results on mnist-rot-12k dataset.}
\label{tab:mnistrot}
\end{table}

\subsubsection{Half-rotated MNIST dataset}
The second dataset we consider is the dataset introduced in \cite{stn}. There are two reasons why the authors decided to advance further from the original mnist-rot-12k. First, mnist-rot-12k is very small in size (five times less than training set in MNIST dataset). And second, it has somewhat artificial limitation of images being rotated full circle. So they proposed to take full MNIST dataset, use random angle in the range $[-\frac{\pi}{2}, \frac{\pi}{2}]$ (\textit{half} the circle) and use the input images rotated by this angle as training samples. This makes the problem a little easier, but closer to real-world scenarios.

As discussed in section \ref{sec:related}, the authors of spatial transformer networks \cite{stn} propose an elegant way of optimising the transformation of the image while learning also the canonical orientation. Here we show that for some classes of transformations, we achieve comparable results with simpler model and shorter training time.

For this problem formulate a set of transformations $\Phi$ as a set of angles sampled uniformly from half a circle, to match the dataset formulation, overall 13 angles. With this relatively simple model, we converge to the results of 0.8\% error within 360 epochs, while STN was trained for 1280 epochs. Moreover, using \textsc{TI pooling} does not require grid sampling and therefore each individual iteration is faster. With this we still match the performance of the most general STN model defined for a class of projection transformations. For more narrow class of transformations selected manually (affine transformations), our results are slightly worse (by 0.1\%). However, we did not optimise with respect to the transformation classes, and therefore the comparison is not fully fair in this case. Table \ref{tab:halfrotmnist} shows further comparison with STN and other related baselines on this dataset. Baseline fully-connected (FCN) and standard convolutional (CNN) neural networks are defined in \cite{stn} and tuned to have approximately the same number of parameters as the baseline STN.

\begin{table}
\begin{center}
\begin{tabular}{|l|c|}
  \hline
  Method & Error, \% \\
  \hline\hline
  FCN & 2.1 \\
  CNN & 1.2 \\
  STN (general) & 0.8 \\
  STN (affine) & 0.7 \\
  \textsc{TI-pooling} (ours) & 0.8  \\
  \hline
\end{tabular}
\end{center}
\caption{Results on half-rotated MNIST dataset.}
\label{tab:halfrotmnist}
\end{table}

\subsection{Neuronal structures segmentation}
\label{sec:experiments:neuro}
The neuronal membrane segmentation dataset was used for ISBI 2012 challenge \cite{neuro}. It consists of images of neuronal tissue captured with serial section transmission electron microscopy. The task is to perform pixel segmentation into two classes: cell membranes and inner parts of the neuron. We take 2x downsampled images and split them into training and test sets: first 25 sections are used for training, and the last five are left for test.

From the neurological experts we know, that membrane appearance does not depend on the orientation of the membrane, and therefore we can safely include $[0, 2\pi]$ rotations in the set of transformations $\Phi$. We sample rotations every 15 degrees, resulting in 24 transformations considered.

Because this is a segmentation task, we extract patches around a pixel and classify those patches (here label of the patch is the label of the central pixel of the patch). We perform training on all the available pixel patches (balanced between classes). The patch is decided to be square and has the size of $46$ pixel, but after the rotation we crop the patch, so the actual input to the network is a $32 \times 32$ patch. Some examples of the patches are present in figure \ref{fig:canonical}.

For every algorithm we run for this dataset, we select the same network topology, in order to better evaluate the improvement of the proposed \textsc{TI-pooling} operator for rotation-invariant feature learning without incorporating any other effects such as model size. As our baselines, we select the following two algorithms, that are closely related to the proposed technique as discussed in sections \ref{sec:intro} and \ref{sec:related}:
\begin{itemize}
  \item standard convolutional neural network with data augmentation, that is able ideally to learn features expressive enough to handle rotations in the data;
  \item multiple instance learning of convolutional neural networks, that is able to learn a transformation-invariant algorithm for a given set of transformations, but not the features.
\end{itemize}

\begin{table}
\begin{center}
\begin{tabular}{|l|c|}
  \hline
  Method & Error, \% \\
  \hline\hline
  MIL over CNN \cite{mil_net}       & 8.9           \\
  CNN with augmentation \cite{alexnet}  & 8.1           \\
  \textsc{TI-pooling} - dropout     & \textbf{7.4}  \\
  \textsc{TI-pooling} + dropout     & \textbf{7.0}  \\
  \hline
\end{tabular}
\end{center}
\caption{Results on neuronal segmentation dataset.}
\label{tab:neuro}
\end{table}

For all the underlying networks we select the same topology as described in table \ref{tab:topology}, except of \textsc{TI-pooling} and the number of outputs (two classes for this dataset). We also report the results with and without dropout, as discussed later.

Table \ref{tab:neuro} shows the pixel error achieved by all the algorithms after 16 epochs. To make the comparison absolutely fair, for standard convolutional neural network with augmentation we record the results after $16*24 = 384$ epochs, so that the number of images "seen" by the algorithm is the same (because for the proposed approach and for the MIL modification, we take the maximum over all the 24 rotations in one iteration). We also run MIL modification with no dropout, and compare the results with the version of our algorithm trained with no dropout. For both baselines we see the significant improvement for the same topology. From this we can conclude that the proposed \textsc{TI-pooling} is indeed very helpful for real-world problems with nuisance variations.

\section{Conclusions}
\label{sec:conclusions}
In this paper we propose a novel framework to incorporate expert knowledge on nuisance variations in the data when training deep neural networks. We formulate a set of transformations that should not affect the algorithm decision and generate multiple instances of the image according to these transformations. Those instances, instead of being used for training independently, are passed through initial layers the network and through \textsc{TI-pooling} operator to form transformation-invariant features. These features are fully-trainable using back-propagation, have the rich expressiveness of standard convolutional neural network features, but at the same time do not depend on the variations in the data.

Convolutional neural network with \textsc{TI-pooling} has some theoretical guarantees of being transformation-invariant algorithm for variations common in computer vision problems. But more importantly, it has some nice practical properties that we show in this work, \eg it permits to learn from the most representative instances, that we call "canonical". Because of that the network does not have to learn features separately for every possible variation of the data from augmented samples, but instead it learns only features that are relevant for one appearance of the image, and then applies it for all the variations. It also allows to better use the input data to learn these features: \eg all the samples including edges participate in learning transformation-invariant edge detector feature, and no separate vertical or horizontal edge detector features are needed.

We test the method on three datasets with explicitly defined variability. In all the experiments we either significantly outperform or at least match the performance of baseline state of the art techniques. Often we also show faster convergence rates than baselines with smaller yet smarter data-aware models.

The proposed \textsc{TI-pooling} operator can be used as a separate neuronal unit for most networks architectures with very little effort to incorporate prior knowledge on nuisance factors in the data. But the range of its applications goes well beyond that, allowing to incorporate many types of prior information on the data and opening the opportunities for more robust expert-driven algorithms in combination with the powerful expressiveness of deep learning.

\textbf{Acknowledgements:}
The work is partially funded by the Swiss National Science Foundation projects 157101 and 163910.

{\small
\bibliographystyle{ieee}
\bibliography{laptev_cvpr16}

\begin{thebibliography}{10}\itemsep=-1pt

\bibitem{maxpooling}
Y.-L. Boureau, J.~Ponce, and Y.~LeCun.
\newblock A theoretical analysis of feature pooling in visual recognition.
\newblock In {\em Proceedings of the 27th International Conference on Machine
  Learning (ICML-10)}, pages 111--118, 2010.

\bibitem{siamese}
J.~Bromley, J.~W. Bentz, L.~Bottou, I.~Guyon, Y.~LeCun, C.~Moore,
  E.~S{\"a}ckinger, and R.~Shah.
\newblock Signature verification using a “siamese” time delay neural
  network.
\newblock {\em International Journal of Pattern Recognition and Artificial
  Intelligence}, 7(04):669--688, 1993.

\bibitem{exp1_1}
J.~Bruna and S.~Mallat.
\newblock Invariant scattering convolution networks.
\newblock {\em Pattern Analysis and Machine Intelligence, IEEE Transactions
  on}, 35(8):1872--1886, 2013.

\bibitem{neuro}
A.~Cardona, S.~Saalfeld, S.~Preibisch, B.~Schmid, A.~Cheng, J.~Pulokas,
  P.~Tomancak, and V.~Hartenstein.
\newblock An integrated micro-and macroarchitectural analysis of the drosophila
  brain by computer-assisted serial section electron microscopy.
\newblock {\em PLoS biology}, 8(10):e1000502, 2010.

\bibitem{exp1_2}
T.-H. Chan and et~al.
\newblock Pcanet: A simple deep learning baseline for image classification?
\newblock 24, 2015.

\bibitem{backprop}
Y.~Chauvin and D.~E. Rumelhart.
\newblock {\em Backpropagation: theory, architectures, and applications}.
\newblock Psychology Press, 1995.

\bibitem{ciresan}
D.~Ciresan, A.~Giusti, L.~M. Gambardella, and J.~Schmidhuber.
\newblock Deep neural networks segment neuronal membranes in electron
  microscopy images.
\newblock In {\em Advances in neural information processing systems}, pages
  2843--2851, 2012.

\bibitem{multicol}
D.~Ciresan, U.~Meier, and J.~Schmidhuber.
\newblock Multi-column deep neural networks for image classification.
\newblock In {\em Computer Vision and Pattern Recognition (CVPR), 2012 IEEE
  Conference on}, pages 3642--3649. IEEE, 2012.

\bibitem{torch7}
R.~Collobert, K.~Kavukcuoglu, and C.~Farabet.
\newblock Torch7: A matlab-like environment for machine learning.
\newblock In {\em BigLearn, NIPS Workshop}, number EPFL-CONF-192376, 2011.

\bibitem{dropout}
G.~E. Hinton, N.~Srivastava, A.~Krizhevsky, I.~Sutskever, and R.~R.
  Salakhutdinov.
\newblock Improving neural networks by preventing co-adaptation of feature
  detectors.
\newblock {\em arXiv preprint arXiv:1207.0580}, 2012.

\bibitem{stn}
M.~Jaderberg, K.~Simonyan, A.~Zisserman, and K.~Kavukcuoglu.
\newblock Spatial transformer networks.
\newblock 2015.

\bibitem{alexnet}
A.~Krizhevsky, I.~Sutskever, and G.~E. Hinton.
\newblock Imagenet classification with deep convolutional neural networks.
\newblock In {\em Advances in neural information processing systems}, pages
  1097--1105, 2012.

\bibitem{ticdj}
D.~Laptev and J.~M. Buhmann.
\newblock Transformation-invariant convolutional jungles.
\newblock In {\em Proceedings of the IEEE Conference on Computer Vision and
  Pattern Recognition (CVPR 2015)}, 2015.

\bibitem{mnistrot}
H.~Larochelle, D.~Erhan, A.~Courville, J.~Bergstra, and Y.~Bengio.
\newblock An empirical evaluation of deep architectures on problems with many
  factors of variation.
\newblock In {\em Proceedings of the 24th international conference on Machine
  learning}, pages 473--480. ACM, 2007.

\bibitem{rift}
S.~Lazebnik, C.~Schmid, J.~Ponce, et~al.
\newblock Semi-local affine parts for object recognition.
\newblock In {\em British Machine Vision Conference (BMVC'04)}, pages 779--788,
  2004.

\bibitem{nnstructure}
Y.~LeCun and Y.~Bengio.
\newblock Convolutional networks for images, speech, and time series.
\newblock {\em The handbook of brain theory and neural networks}, 3361(10),
  1995.

\bibitem{cnnreg}
Y.~LeCun, L.~Jackel, L.~Bottou, A.~Brunot, C.~Cortes, J.~Denker, H.~Drucker,
  I.~Guyon, U.~Muller, E.~Sackinger, et~al.
\newblock Comparison of learning algorithms for handwritten digit recognition.
\newblock In {\em International conference on artificial neural networks},
  volume~60, pages 53--60, 1995.

\bibitem{anyfunc}
M.~Leshno, V.~Y. Lin, A.~Pinkus, and S.~Schocken.
\newblock Multilayer feedforward networks with a nonpolynomial activation
  function can approximate any function.
\newblock {\em Neural networks}, 6(6):861--867, 1993.

\bibitem{mnist}
C.-L. Liu and et~al.
\newblock Handwritten digit recognition: benchmarking of state-of-the-art
  techniques.
\newblock {\em Pattern Recognition}, 36(10):2271--2285, 2003.

\bibitem{sift}
D.~G. Lowe.
\newblock Object recognition from local scale-invariant features.
\newblock In {\em Computer vision, 1999. The proceedings of the seventh IEEE
  international conference on}, volume~2, pages 1150--1157. Ieee, 1999.

\bibitem{bovw}
J.~Philbin, O.~Chum, M.~Isard, J.~Sivic, and A.~Zisserman.
\newblock Lost in quantization: Improving particular object retrieval in large
  scale image databases.
\newblock In {\em Computer Vision and Pattern Recognition, 2008. CVPR 2008.
  IEEE Conference on}, pages 1--8. IEEE, 2008.

\bibitem{hmax}
M.~Riesenhuber and T.~Poggio.
\newblock Hierarchical models of object recognition in cortex.
\newblock {\em Nature neuroscience}, 2(11):1019--1025, 1999.

\bibitem{lft}
K.~Sandberg and M.~Brega.
\newblock Segmentation of thin structures in electron micrographs using
  orientation fields.
\newblock {\em Journal of structural biology}, 157(2):403--415, 2007.

\bibitem{fuchs}
P.~J. Sch{\"u}ffler, T.~J. Fuchs, C.~S. Ong, V.~Roth, and J.~M. Buhmann.
\newblock Computational tma analysis and cell nucleus classification of renal
  cell carcinoma.
\newblock In {\em Pattern Recognition}, pages 202--211. Springer, 2010.

\bibitem{exp1_3}
K.~Sohn and H.~Lee.
\newblock Learning invariant representations with local transformations.
\newblock In {\em Proceedings of the 29th International Conference on Machine
  Learning (ICML-12)}, pages 1311--1318, 2012.

\bibitem{su2015multi}
H.~Su, S.~Maji, E.~Kalogerakis, and E.~Learned-Miller.
\newblock Multi-view convolutional neural networks for 3d shape recognition.
\newblock In {\em Proceedings of the IEEE International Conference on Computer
  Vision}, pages 945--953, 2015.

\bibitem{augmentation}
D.~A. Van~Dyk and X.-L. Meng.
\newblock The art of data augmentation.
\newblock {\em Journal of Computational and Graphical Statistics}, 10(1), 2001.

\bibitem{mil_net}
J.~Wu, Y.~Yu, C.~Huang, and K.~Yu.
\newblock Deep multiple instance learning for image classification and
  auto-annotation.
\newblock In {\em Proceedings of the IEEE Conference on Computer Vision and
  Pattern Recognition}, pages 3460--3469, 2015.

\bibitem{rbovw}
Y.~Yang and S.~Newsam.
\newblock Bag-of-visual-words and spatial extensions for land-use
  classification.
\newblock In {\em Proceedings of the 18th SIGSPATIAL International Conference
  on Advances in Geographic Information Systems}, pages 270--279. ACM, 2010.

\bibitem{adadelta}
M.~D. Zeiler.
\newblock Adadelta: An adaptive learning rate method.
\newblock {\em arXiv preprint arXiv:1212.5701}, 2012.

\end{thebibliography}
}

\end{document}